\newtheorem{theorem}{Theorem}
\newcommand{\nop}[1]{}
\icmltitlerunning{Adversarial Mutual Information for Text Generation}
\begin{document}

\twocolumn[
\icmltitle{Adversarial Mutual Information for Text Generation}

\icmlsetsymbol{equal}{*}

\begin{icmlauthorlist}
\icmlauthor{Boyuan Pan}{1,equal}
\icmlauthor{Yazheng Yang}{2,equal}
\icmlauthor{Kaizhao Liang}{3}
\icmlauthor{Bhavya Kailkhura}{4}
\icmlauthor{Zhongming Jin}{5}
\icmlauthor{Xian-Sheng Hua}{5}
\icmlauthor{Deng Cai}{1}
\icmlauthor{Bo Li}{3}

\end{icmlauthorlist}

\icmlaffiliation{1}{State Key Lab of CAD\&CG, Zhejiang University}
\icmlaffiliation{2}{College of Computer Science, Zhejiang University}
\icmlaffiliation{3}{University of Illinois Urbana-Champaign}
\icmlaffiliation{4}{Lawrence Livermore National Laboratory}
\icmlaffiliation{5}{Alibaba Group}

\icmlcorrespondingauthor{Boyuan Pan}{panby@zju.edu.cn}
\icmlcorrespondingauthor{Bo Li}{lbo@illinois.edu}

\icmlkeywords{Machine Learning, ICML}

\vskip 0.3in
]



\printAffiliationsAndNotice{\icmlEqualContribution}

\begin{abstract}
    Recent advances in maximizing mutual information (MI) between the source and target have demonstrated its effectiveness in text generation. However, previous works paid little attention to modeling the backward network of MI (\textit{i.e.} dependency from the target to the source), which is crucial to the tightness of the variational information maximization lower bound. 
    In this paper, we propose \textit{Adversarial Mutual Information} (AMI): a text generation framework which is formed as a novel saddle point (min-max) optimization aiming to identify joint interactions between the source and target. Within this framework, the forward and backward networks are able to iteratively promote or demote each other's generated instances by comparing the real and synthetic data distributions. We also develop a latent noise sampling strategy that leverages random variations at the high-level semantic space to enhance the long term dependency in the generation process. Extensive experiments based on different text generation tasks demonstrate that the proposed AMI framework can significantly outperform several strong baselines, and we also show that AMI has potential to lead to a tighter lower bound of maximum mutual information for the variational information maximization problem.
\end{abstract}

\section{Introduction}
Generating diverse and meaningful text is one of the coveted goals in machine learning research. Most sequence transductive models for text generation can be efficiently trained by maximum likelihood estimation (MLE) and have demonstrated dominant performance in various tasks, such as dialog generation~\cite{serban2016building,park2018hierarchical}, machine translation~\cite{bahdanau2014neural,vaswani2017attention} and document summarization~\cite{see2017get}. However, the MLE-based training schemes have been shown to produce safe but dull texts which are ascribed to the relative frequency of generic phrases in the dataset such as ``\textit{I don't know}" or ``\textit{what do you do}"~\cite{serban2016building}.

Recently, \citet{li2016diversity} proposed to use maximum mutual information (MMI) in the dialog generation as the objective function. They showed that modeling mutual information between the source and target will significantly decrease the chance of generating bland sentences and improve the informativeness of the generated responses, which has inspired many important works on MI-prompting text generation~\cite{li2016deep,zhang2018generating,ye2019jointly}. In practice, directly maximizing the mutual information is intractable, so it is often converted to a lower bound called the variational information maximization (VIM) problem by defining an auxiliary backward network to approximate the posterior of the dependency from the target to the source~\cite{chen2016infogan}. Specifically, the likelihood of generating the real source text by the backward network, whose input is the synthetic target text produced by the forward network, will be used as a reward to optimize the forward network, tailoring it to generate sentences with higher rewards. 

However, most of them optimize the backward network by maximizing the likelihood stated above. Without the guarantee of the quality of the synthetic target text produced by the forward network, this procedure encourages the backward model to generate the real source text from many uninformative sentences and is not likely to make the backward network approach the true posterior, thus affects the tightness of the variational lower bound. On the other hand, since the likelihood of the backward network is a reward for optimizing the forward network, a biased backward network may provide unreliable reward scores, which will mislead the forward network to still generate bland text.

In this paper, we present a simple yet effective framework based on the maximum mutual information objective that encourages it to train the forward network and the backward network adversarially. We propose \textit{Adversarial Mutual Information} (AMI), a new text generative framework that addresses the above issues by solving an information-regularized minimax optimization problem. Instead of maximizing the variational lower bound for both the forward network and the backward network, we maximize the objective when training the forward network while minimizing the objective when training the backward network. In addition to the original objective, we add the expectation of the negative likelihood of the backward model whose sources and targets are sampled from the real data distribution. To this end, we encourage the backward network to generate real source text only when its input is in the real target distribution. This scheme remedies the problem that the backward model gives high rewards when its input is the bland text or the text that looks human-generated but has little related content to the real data. To stabilize training, we prove that our objective function can be transformed to minimizing the Wasserstein Distance~\cite{arjovsky2017wasserstein}, which is known to be differentiable almost everywhere under mild assumptions. Moreover, we present a latent noise sampling strategy that adds sampled noise to the high-level latent space of the model in order to affect the long-term dependency of the sequence, hence enables the model to generate more diverse and informative text.

{\bf \underline{Technical Contributions}}
In this paper, we take the first step
towards generating diverse and informative text by optimizing adversarial mutual information. We make contributions on both the theoretical
and empirical fronts.
\vspace{-0.5em}
\begin{itemize}
    \item  We propose \textit{Adversarial Mutual Information} (AMI), a novel text generation framework that adversarially optimizes the mutual information maximization objective function. We provide a theoretical analysis on how our objective can be transformed into the Wasserstein distance minimization problem under certain modifications.
	\item To encourage generating diverse and meaningful text, we develop a latent noise sampling method that adds random variations to the latent semantic space to enhance the long term dependency for the generation process.
	\item We conduct extensive experiments on the tasks of dialog generation and machine translation, which demonstrate that the proposed AMI framework outperforms several strong baselines significantly. We show that our method has the potential to lead to a tighter lower bound of maximum mutual information.
\vspace{-0.5em}
\end{itemize}

\section{Background}
\subsection{Sequence Transduction Model}
\label{seq}
The sequence-to-sequence models~\cite{cho2014learning,sutskever2014sequence} and the Transformer-based models~\cite{vaswani2017attention} have been firmly established as the state-of-the-art approaches in text generation problems such as dialog generation or machine translation~\cite{wu2018globaltolocal,pan2018macnet,akoury2019syntactically}. In general, most of these sequence transduction models hold an encoder-decoder architecture~\cite{bahdanau2014neural}, where the encoder maps an input sequence of symbol representations $S = \{x_1, x_2, ..., x_n\}$ to a sequence of continuous representations $\bm{z} = \{z_1, z_2, ..., z_n\}$. Given $\bm{z}$, the decoder then generates an output sequence $T = \{y_1, y_2, ..., y_m\}$ of tokens one element at a time. At each time step the model is auto-regressive, consuming the previously generated tokens as additional input when generating the next. 

For the sequence-to-sequence models, the encoder and the decoder are usually based on recurrent or convolutional neural networks, and the Transformer builds them using stacked self-attention and point-wise fully connected layers. Given the source $S$ and the target $T$, the objective of a sequence transduction model can be simply formulated as:
\begin{equation}
\begin{aligned}
\theta^{*} = \arg\max_{\theta} ~P_{\theta}(T|S)\\
\end{aligned}
\end{equation}
where $\theta$ is the parameters of the sequence transduction model.

\begin{figure*}[t]
	\center
	\includegraphics[width=0.95 \textwidth]{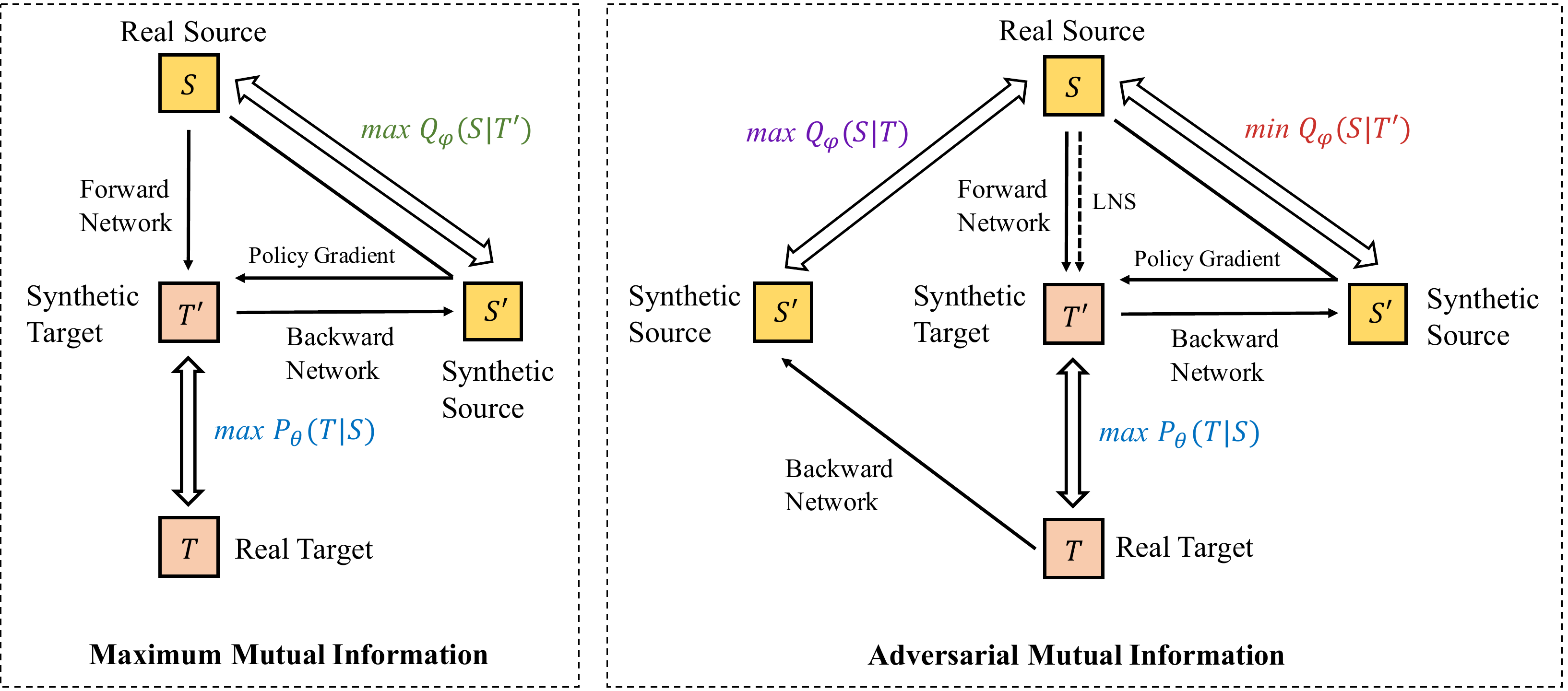}
	\caption{\label{fig1}The overview of frameworks for the Maximum Mutual Information (left part) and the proposed Adversarial Mutual Information (right part). Different colors of text denotes different objectives.}
\end{figure*}

\subsection{Maximum Mutual Information}
Maximum mutual information for text generation is increasingly being studied~\cite{li2016diversity,zhang2018generating,ye2019jointly}. Compared to the maximum likelihood estimation (MLE) objective, maximizing the mutual information $I(S,T)$ between the source and target encourages the model to generate sequences that are more specific to the source. Formally, we have
\begin{equation}
\begin{aligned}
I(S,T) = \mathds{E}_{P(T,S)} \left[{\rm log}\frac{P(S,T)}{P(S)P(T)}\right].
\end{aligned}
\end{equation}
However, directly optimizing the mutual information is intractable. To provide a principled approach to maximizing MI, a Variational Information Maximization~\cite{barber2003algorithm,chen2016infogan,zhang2018generating} lower bound is adopted:
\begin{equation}
\begin{aligned}
\label{elbo}
I(S,T) &=  H(S) - H(S|T)\\
&= H(S) + \mathds{E}_{P(T,S)} \left[{\rm log} P(S|T)\right]\\
&= H(S) + \mathds{E}_{P(T)} \left[D_{KL}(P(S|T) || Q_{\phi}(S|T))\right] \\
&+ \mathds{E}_{P(T,S)} \left[{\rm log} Q_{\phi}(S|T)\right]\\
& \geq H(S) + \mathds{E}_{P(S)} \mathds{E}_{P_{\theta}(T|S)} \left[{\rm log}Q_{\phi}(S|T)\right]
\end{aligned}
\end{equation}
where $H(\cdot)$ denotes the entropy, and $D_{KL}(\cdot, \cdot)$ denotes the KL divergence between two distributions. $Q_{\phi}(S|T)$ is a \textit{backward network} that approximates the unknown $P(S|T)$, and usually shares the same architecture with the forward network $P_{\theta}(T|S)$. Note that the entropy term associated with the training data is ignored as it does not involve the parameters we are trying to optimize.

Given the source text $S$, the objective of MMI is equivalent to maximizing:
\begin{equation}
\begin{aligned}
\label{eq_mmi}
\max_{\theta, \phi} ~\mathds{E}_{T' \sim P_{\theta}(T'|S)} \left[{\rm log}Q_{\phi}(S|T')\right]
\end{aligned}
\end{equation}
where $\theta$ and $\phi$ are the parameters of the forward network and the backward network respectively. We denote the synthetic target text generated by the forward network as $T'$ in order to distinguish it from the real data.

\section{Adversarial Mutual Information}
In this section, as shown in the Figure \ref{fig1}, we present the drawbacks of directly optimizing the MMI problem and introduce how our proposed Adversarial Mutual Information (AMI) solves these issues by adversarially optimizing its modules.

Intuitively, the objective in the equation (\ref{eq_mmi}) for the forward network $P_{\theta}$ can be viewed as to use the log-likelihood of the backward model $Q_{\phi}$, whose input is the synthetic target text produced by $P_{\theta}$, to reward or punish $P_{\theta}$. This encourages the forward network to generate text that has a stronger connection with its input. For the backward network $Q_{\phi}$, however, maximizing the log-likelihood of $Q_{\phi}$ when its input is the synthetic text is not always optimal, especially when this synthetic text is uninformative or has little intersection in content with the source text. Although this scheme is maximizing the mutual information in the form, it optimizes $Q_{\phi}$ towards a biased direction and a low-quality $Q_{\phi}$ may bring unreliable reward signal. In addition, the lower bound shown in the equation (\ref{elbo}) becomes tight as the approximated distribution $Q_{\phi}$ approaches the true posterior distribution: $\mathds{E} \left[D_{KL}(P(\cdot) || Q_{\phi}(\cdot))\right] \rightarrow 0$, which is not likely to be achieved when many negative samples are involved. 

Inspired by the delicate but straightforward idea of the generative adversarial networks~\cite{goodfellow2014generative}, we set all the synthetic target text as the negative samples and force the backward model to generate correct source text only if its input is the real target text. Formally, we propose to solve the following minimax game:
\begin{equation}
\label{eq_adv}
\small
\begin{aligned}
\min_{\phi}\max_{\theta} ~ 
\mathrlap{\overbrace{\phantom{P_{\theta}(T|S) + \mathds{E}_{T' \sim P_{\theta}(T'|S)} \left[Q_{\phi}(S|T')\right]}}^{\text{\scriptsize Mutual Information}}}
      P_{\theta}(T|S) + 
      \mathrlap{\underbrace{\phantom{\mathds{E}_{T' \sim P_{\theta}(T'|S)} \left[Q_{\phi}(S|T')\right] - (Q_{\phi}(S|T)}}_{\text{\scriptsize Adversarial Training}}}
      \mathds{E}_{T' \sim P_{\theta}(T'|S)} \left[Q_{\phi}(S|T')\right] 
      - Q_{\phi}(S|T)
\end{aligned}
\end{equation}
As we can see, we still maximize the mutual information when optimizing $P_{\theta}$, but we minimize it when optimizing $Q_{\phi}$. Moreover, we add the third term which minimizes the negative likelihood of $Q_{\phi}$ when feeding it with the real data. By regularizing the backward network with both the synthetic data and real data, the forward network will be trained against a smarter backward network that only rewards informative and related text. Here we omit the logarithm symbol for simplicity, and we also add $P_{\theta}(T|S)$ to the objective in order to balance the positive samples, which is referred as the teacher-forcing algorithm~\cite{li2017adversarial}.

\subsection{Wasserstein Distance Minimization}
Unfortunately, training such a framework similar to GAN is well known of being delicate and unstable~\cite{DBLP:conf/iclr/ArjovskyB17}. The recent work of~\cite{arjovsky2017wasserstein} indicated the superiority of the \textit{Wasserstein distance} in learning distributions due to its continuity and differentiability. They demonstrated their method significantly improves the stability of training and gained significant performance with the objective of minimizing the Wasserstein distance. In the following, we will prove that our adversarial training part of the objective function can be transformed to minimizing the Wasserstein distance under some simple assumptions.

To learn the proper distribution of the dialog utterances, in practice we replace the third term of the equation (\ref{eq_adv}) by the expectation $\mathds{E}_{T \sim P_{r}(D)} Q_{\phi}(S|T)$ where $P_r (D)$ is the real data distribution, $D$ is the dataset. 
\begin{theorem} 
Let $P_{r}(D)$ be any distribution over $D$. Let $P_{\theta}(\cdot | S)$ be the distribution of a neural network given $S$.  $Q_{\phi}(S|T)$ represents the probability of a neural network where the input is $T$ and the output is $S$, which can be formed as a function $f_{\phi}(S,T)$. We say the following problem can be considered as the Wasserstein Distance between $P_{r}$ and $P_{\theta}$ if there is a constant $K$ satisfying $\lVert f_{\phi} \lVert_{L} \leq K$:
\begin{equation}
\begin{aligned}
\min_{\phi}~ \mathds{E}_{T' \sim P_{\theta}(T'|S)} \left[Q_{\phi}(S|T')\right] -\mathds{E}_{T \sim P_{r}(D)} \left[Q_{\phi}(S|T)\right]
\end{aligned}
\end{equation}
where $\lVert \cdot \lVert_{L} \leq K$ denotes K-Lipschitz.
\end{theorem}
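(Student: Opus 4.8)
The plan is to recognize the displayed objective as the Kantorovich--Rubinstein dual of the Wasserstein-1 distance, so that the minimization over $\phi$ coincides, up to a fixed positive constant and a sign, with minimizing $W(P_r, P_\theta)$. Recall that for two distributions over the target space, the Wasserstein-1 distance has the primal form
\[
W(P_r, P_\theta) = \inf_{\gamma \in \Pi(P_r, P_\theta)} \mathds{E}_{(T,T') \sim \gamma}\left[ \lVert T - T' \rVert \right],
\]
where $\Pi(P_r, P_\theta)$ is the set of couplings with the prescribed marginals, and by Kantorovich--Rubinstein duality it equals
\[
W(P_r, P_\theta) = \sup_{\lVert g \rVert_L \leq 1} \mathds{E}_{T \sim P_r}\left[ g(T) \right] - \mathds{E}_{T' \sim P_\theta}\left[ g(T') \right].
\]
I would take this duality as known, since the paper already invokes~\cite{arjovsky2017wasserstein}.

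First I would fix the source $S$ and regard $g(\cdot) := f_\phi(S, \cdot)$ as a scalar test function of the target alone. The hypothesis $\lVert f_\phi \rVert_L \leq K$ says precisely that $g/K$ is $1$-Lipschitz, so $g$ is an admissible (scaled) witness in the dual problem. Next I would rewrite the minimization as a maximization of its negation,
\[
\min_\phi \left( \mathds{E}_{T' \sim P_\theta}\left[ f_\phi \right] - \mathds{E}_{T \sim P_r}\left[ f_\phi \right] \right) = -\max_\phi \left( \mathds{E}_{T \sim P_r}\left[ f_\phi \right] - \mathds{E}_{T' \sim P_\theta}\left[ f_\phi \right] \right),
\]
which aligns the sign conventions with the dual representation above.

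The concluding step is to pull the constant $K$ out of the Lipschitz constraint: maximizing over the family $\{f_\phi\}$ subject to $\lVert f_\phi \rVert_L \leq K$ attains $K \cdot W(P_r, P_\theta)$, so the left-hand minimization equals $-K \cdot W(P_r, P_\theta)$. Since $K>0$ is fixed, driving this objective down is equivalent to minimizing $W(P_r, P_\theta)$, and the minimizing $\phi$ is unaffected by the sign and the scalar factor; this justifies treating the problem ``as the Wasserstein distance.''

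The main obstacle is bridging the parametric family $\{f_\phi\}$ realized by the backward network with the full class of $K$-Lipschitz functions demanded by the duality: the supremum ranges over \emph{all} $1$-Lipschitz $g$, whereas the optimization only runs over the weights $\phi$. The equality therefore holds exactly only under the standard WGAN assumption that the network is expressive enough to represent the optimal Lipschitz witness, and the Lipschitz bound must be enforced in practice (for instance by weight clipping as in~\cite{arjovsky2017wasserstein}). A secondary subtlety I would make explicit before invoking Kantorovich--Rubinstein is the choice of metric underlying $\lVert \cdot \rVert_L$ on the (discrete) target space, so that the transport cost $\lVert T - T' \rVert$ in the primal form is well defined.
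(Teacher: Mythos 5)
Your proof follows essentially the same route as the paper's: negate the objective to turn the minimization into a maximization, identify the constrained maximization over $\phi$ with the supremum over $K$-Lipschitz functions, and invoke the Kantorovich--Rubinstein duality to obtain $K$ times $W(P_r, P_\theta)$. You are in fact more careful than the paper on one point --- you explicitly flag that equating the maximum over the parametric family $\{f_\phi\}$ with the supremum over \emph{all} $K$-Lipschitz functions requires an expressiveness assumption on the network, a step the paper's proof asserts without comment.
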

\begin{proof}
It is obvious that the above problem is equivalent to:
\begin{equation}
\begin{aligned}
\max_{\phi}~ \mathds{E}_{T \sim P_{r}(D)} \left[Q_{\phi}(S|T)\right] - \mathds{E}_{T' \sim P_{\theta}(T'|S)}\left[Q_{\phi}(S|T')\right]
\end{aligned}
\end{equation}
which is also equivalent to solving the problem:
\begin{equation}
\begin{aligned}
&\sup_{ \lVert f_{\phi} \lVert_{L} \leq K}~ \mathds{E}_{T \sim P_{r}(D)}\left[ Q_{\phi}(S|T)\right] - \mathds{E}_{T' \sim P_{\theta}(T'|S)} \left[Q_{\phi}(S|T')\right]
\end{aligned}
\end{equation}
where supremum is over all the $K$-Lipschitz functions $Q_{\phi}$. According to the Kantorovich-Rubinstein duality~\cite{villani2008optimal}, the above form is $K$ times of the Wasserstein Distance between $P_{r}$ and $P_{\theta}$:
\begin{equation}
\begin{aligned}
W(P_{r},  P_{\theta}) = \inf_{\gamma \in \prod (P_{r},  P_{\theta})}\mathds{E}_{(x,y) \sim \gamma} \left[ \lVert x - y \lVert \right]
\end{aligned}
\end{equation}
where $\prod (P_{r},  P_{\theta})$ denotes the set of all joint distributions $\gamma(x, y)$ whose marginals are respectively $P_{r}$ and $P_{\theta}$.
\end{proof}
In order to satisfy the $K$-Lipschitz striction of $f_{\phi}$, we use the weight clipping trick~\cite{arjovsky2017wasserstein} to clamp the weights to a fixed box (say $[-0.01, 0.01^{l}]$) after each gradient update. Therefore, we modify the adversarial training part in the equation (\ref{eq_adv}) as:
\begin{equation}
\begin{aligned}
\label{eq_wgan}
\min_{\theta}\max_{\phi}~ \mathds{E}_{T \sim P_{r}(D)} \left[Q_{\phi}(S|T)\right] - \mathds{E}_{T' \sim P_{\theta}(T'|S)} \left[Q_{\phi}(S|T')\right]
\end{aligned}
\end{equation}
which can be transformed to the problem of minimizing a Wasserstein Distance.

We notice that sometimes the forward network may generate $T'$ that is the same as $T$ or very similar, which should not be punished for the adversarial training of the backward model. Hence, we add a multiplier $K(T')$ to the second term of the equation (\ref{eq_wgan}) to regularize the minimization procedure:
\begin{equation}
\begin{aligned}
\label{eq_k}
& -\mathds{E}_{T' \sim P_{\theta}(T'|S)} \left[K(T') \cdot Q_{\phi}(S|T')\right] \\
= & - \mathds{E}_{T' \sim P_{\theta}(T'|S)} \left[(1 - cos(T,T')) \cdot Q_{\phi}(S|T')\right] \\
\end{aligned}
\end{equation}
where $cos(T,T')$ is the cosine similarity between the embeddings of $T$ and $T'$.

Since the equation (\ref{eq_k}) w.r.t. $\theta$ is not differentiable, we adopt Monte Carlo samples using the REINFORCE policy~\cite{williams1992simple} to approximate the gradient with regard to $\theta$:
\begin{equation}
\begin{aligned}
&-\nabla_{\theta} \mathds{E}_{T' \sim P_{\theta}(T'|S)} \left[K(T') \cdot Q_{\phi}(S|T')\right]\\
= &-\mathds{E}_{T' \sim P_{\theta}(T'|S)} \left[K(T') \cdot Q_{\phi}(S|T') - b \right] \cdot \nabla_{\theta}{\rm log}P_{\theta}(T'|S),\\
\end{aligned}
\end{equation}
To remedy the high variance of learning signals, we adopt the baseline function $b$ by empirically averaging the signals to stabilize the learning process~\cite{williams1992simple}. The gradient of the second term of the equation (\ref{eq_k}) with regard to $\phi$ can be calculated as: 
\begin{equation}
\begin{aligned}
& - \nabla_{\phi} \mathds{E}_{T' \sim P_{\theta}(T'|S)} \left[K(T') \cdot Q_{\phi}(S|T')\right] \\
=&- \mathds{E}_{T' \sim P_{\theta}(T'|S)} \left[K(T') \cdot \nabla_{\phi}Q_{\phi}(S|T')\right]
\end{aligned}
\end{equation}

\begin{table*}[t]
    \small
	\centering

	\begin{tabular}{p{4.5cm}ccc|cccc}
		\toprule
		\multicolumn{1}{c}{\multirow{2}{*}{\textbf{Models}}} & \multicolumn{3}{c}{\textbf{Relevance}} & \multicolumn{3}{c}{\textbf{Diversity}} &  \\ \cline{2-7} 
		\multicolumn{1}{c}{}  & \textbf{Average}  & \textbf{Greedy} & \textbf{Extrema}    & \textbf{~Dist-1~}     & \textbf{~Dist-2~} &\textbf{~Ent-4~}     \\ \hline 
        HRED~\cite{serban2016building} & 0.820 & 0.623 & 0.391 & 0.063 & 0.262 & 8.861 \\
        cGAN~\cite{li2017adversarial} & 0.841 & 0.648 & 0.410  & 0.069 &  0.272 &  9.418  \\ 
        VHCR~\cite{park2018hierarchical} & 0.855 & 0.663 & 0.430  & 0.085 & 0.348 & 9.602\\
        Dir-VHRED~\cite{zeng2019dirichlet}  & 0.862 & 0.660  & 0.431  & 0.076 &  0.306 &  9.586  \\ \hline \hline
        LSTM  & 0.817 & 0.611 & 0.383 & 0.061 & 0.260 & 8.855\\ 
        LSTM + MMI & 0.825 & 0.617 & 0.391 & 0.065 & 0.265 & 8.894\\ 
        LSTM + \textbf{AMI} (w/o LNS)& 0.851 & 0.644 & 0.423 & 0.078 & 0.287 & 9.176 \\
        LSTM + \textbf{AMI} & 0.858 & 0.655 & 0.429 & 0.082 & 0.294 & 9.205 \\ \hline
        Transformer & 0.828 & 0.627 & 0.398  & 0.077 & 0.347 & 9.440 \\ 
        Transformer + MMI & 0.835 & 0.636 & 0.406 & 0.082 & 0.356 & 9.493  \\  
        Transformer + \textbf{AMI}  (w/o LNS) & 0.859 & 0.661 & 0.424  & 0.096  & 0.372   & 9.632 \\ 
        Transformer + \textbf{AMI}  & \textbf{0.869} & \textbf{0.672}  & \textbf{0.433} & \textbf{0.104}  & \textbf{0.385} & \textbf{9.695} \\ 
		\bottomrule
	\end{tabular}
\caption{\label{tab1} Quantitative evaluation for dialog generation on the PersonaChat dataset. ``LNS" denotes the latent noise sampling. The top part presents the baselines, the medium and bottom parts show the performance and ablation results of our AMI framework based on the LSTM and Transformer. Our AMI significantly improves both LSTM and Transformer and achieves the state-of-the-art results.}  
\end{table*}

\subsection{Latent Noise Sampling}
For the Monte Carlo sampling approaches in the policy gradient for NLP tasks, existing works often employ low-level sampling methods such as greedy search or beam search~\cite{li2016deep,paulus2018a}. However, methods like beam search usually produce very similar or even identical results, thus require a large beam width and become inefficient and time-consuming. 

Recent studies in the variational autoencoder (VAE) frameworks have demonstrated that sampling from high-level space can generate abundant variability of natural language by capturing its global and long-term structure~\cite{serban2017hierarchical,zeng2019dirichlet}. Nevertheless, a known problem of the VAE-based dialog model is \textit{degeneracy}, which ignores the latent variable and causes model to behave as a vanilla sequence-to-sequence model~\cite{bowman2016generating,park2018hierarchical}. Moreover, instead of modeling the relation between the given source and target pairs, we expect our samples to be diverse and not limited by the knowledge of the target label. Thus we sample the target text from $P_{\theta}(T'|S)$ by simply adding a noise to the latent representations (\textsection \ref{seq}) of the encoder:
\begin{equation}
\begin{aligned}
\tilde{\bm{z}} = \bm{z} + \delta.
\end{aligned}
\end{equation}
Since the goal of the noise is to maximize both (1) the diversity the generated $T'$ and (2) the probability of reconstructing $S$ with the generated $T'$, we optimize it by:
\begin{equation}
\begin{aligned}
\label{eq_lns}
\max_{\delta} \lambda \lVert \delta \lVert + \mathds{E}_{T' \sim P_{\theta}(T'|S, \delta)} \left[Q_{\phi}(S|T')\right]
\end{aligned}
\end{equation}
where $\lambda$ is a scaling factor accounting for the difference in magnitude between the above two terms. The noise $\delta$ is computed by:
\begin{gather}
\delta \sim \mathcal{N} (\bm{\mu}(\bm{z}), \bm{\sigma}^2(\bm{z}))\\
\bm{\mu}(\bm{z}) = \bm{W}_2 \cdot{\rm ReLU}(\bm{W}_1\bm{z} + \bm{b}_1)\\
\bm{\sigma}(\bm{z}) = {\rm Softplus(\bm{\mu}(\bm{z}) )}
\end{gather}
where $\mathcal{N}(\cdot, \cdot)$ is a Gaussian distribution. Since the backpropagation can not flow through a random node, in practice, we use the reparameterization trick~\cite{kingma2014auto}:
\begin{equation}
\begin{aligned}
\delta &= \bm{\mu}(\bm{z}) + \epsilon \cdot \bm{\sigma}(\bm{z})
\end{aligned}
\end{equation}
where the sample $\epsilon \sim \mathcal{N}(0,\bm{I})$. Therefore, the final AMI objective function is:
\begin{equation}
\begin{aligned}
& \min_{\theta}\max_{\phi} ~ \mathds{E}_{T \sim P_{r}(D)} \left[Q_{\phi}(S|T)\right]\\
&- \mathds{E}_{T' \sim P_{\theta}(T'|S, \delta)} \left[K(T') \cdot Q_{\phi}(S|T')\right]- P_{\theta}(T|S) 
\end{aligned}
\end{equation}
where $T' \sim P_{\theta}(T'|S, \delta)$ means sampling the target text $T'$ from $P_{\theta}(T'|S)$ by adding the noise $\delta$ in the high-level latent space of the model. Note that for the RNN-based sequence-to-sequence model, $\bm{z}$ denotes the concatenation of the last hidden states from both directions.

\section{Experiments}
To show the effectiveness of our approach, we use the bi-directional LSTM and Transformer~\cite{vaswani2017attention} as our base architectures. We apply our AMI framework to the dialog generation (\textsection \ref{dialog}) and neural machine translation (\textsection \ref{nmt}), which are representatives of two popular text generation tasks, and conduct comprehensive analyses on them. Code is available at \url{https://github.com/ZJULearning/AMI}.

\subsection{Dialog Generation}
\label{dialog}
We first verify the effectiveness of our method on the dialog generation task, which requires to generate a coherent and meaningful response given a conversation history.

\paragraph{Dataset}
We evaluate our dialog model on the PersonaChat dataset\footnote{\url{https://github.com/facebookresearch/ParlAI/tree/master/projects/personachat}}~\cite{zhang2018personalizing}. The dataset consists of conversations between crowdworkers who were randomly paired and asked to act the part of a given persona and chat naturally. There are around 160,000 utterances in around 11,000 dialogues, with 2000 dialogues for validation and test, which use non-overlapping personas.

\paragraph{Implementation Details}
We pre-train the forward network and backward network, and iteratively optimize the noise $\delta$, forward network and backward network during AMI training. For the backward network, we follow~\cite{li2016deep} to predict the last utterance of the dialog history. Detailed configurations are in the Appendix \ref{app_dialog}. To measure the relevance between the generated utterance and the ground-truth one, we use the three categories of word-embedding metrics~\cite{rus-lintean-2012-comparison,serban2017hierarchical}: \textit{average}, \textit{greedy} and \textit{extrema}. The \textit{average} metric calculates sentence-level mean embeddings, while the \textit{greedy} and \textit{extrema} compute the word-to-word cosine similarity. Their difference lies on that \textit{greedy} takes the average word vector in the sentence as the sentence embedding while \textit{extrema} adopts the extreme value of these word vectors. To evaluate the diversity of the generated questions, we follow \cite{li2016diversity} to calculate \textit{Dist-n} (\textit{n=1,2}), which is the proportion of unique n-grams over the total number of n-grams in the generated utterances for all conversations, and \cite{zhang2018generating} to use the \textit{Ent-4} metric, which reflects how evenly the n-gram distribution is over all generated utterances.

\paragraph{Baselines}
We compare our proposed method based on the BiLSTM and Transformer with the following baselines: (i) HRED~\cite{serban2017hierarchical}: The hierarchical recurrent encoder-decoder baseline model. (ii) cGAN~\cite{li2017adversarial}: Conditional generative adversarial network based on the sequence-to-sequence model. (iii) VHCR~\cite{park2018hierarchical}: Variational hierarchical conversation RNN, a hierarchical latent variable model with the utterance drop regularization. (iv) Dir-VHRED~\cite{zeng2019dirichlet}: Using Dirichlet distribution in place of traditional Gaussian distribution in variational HRED for dialogue generation.

\paragraph{Results}

\begin{table}
\small
	\begin{center}

		\begin{tabular}{lcc}
			\toprule
			\textbf{Backward Network} &  \textbf{Synthetic}&  \textbf{Real} \\
			\midrule
            MMI & 0.807  & 0.805 \\ 
            MMI (fixed $\phi$) & 0.809 & 0.812 \\ 
            AMI  &  0.776 & 0.820 \\ 
			\bottomrule
		\end{tabular}
	\end{center}
		\caption{\label{tab22} Embedding average for the output of the backward network, where the input is the utterance (1) generated by the forward network or (2) from the dataset created by human.}    	

\end{table}

\begin{figure}[t]
	\center
	\includegraphics[width=0.4 \textwidth]{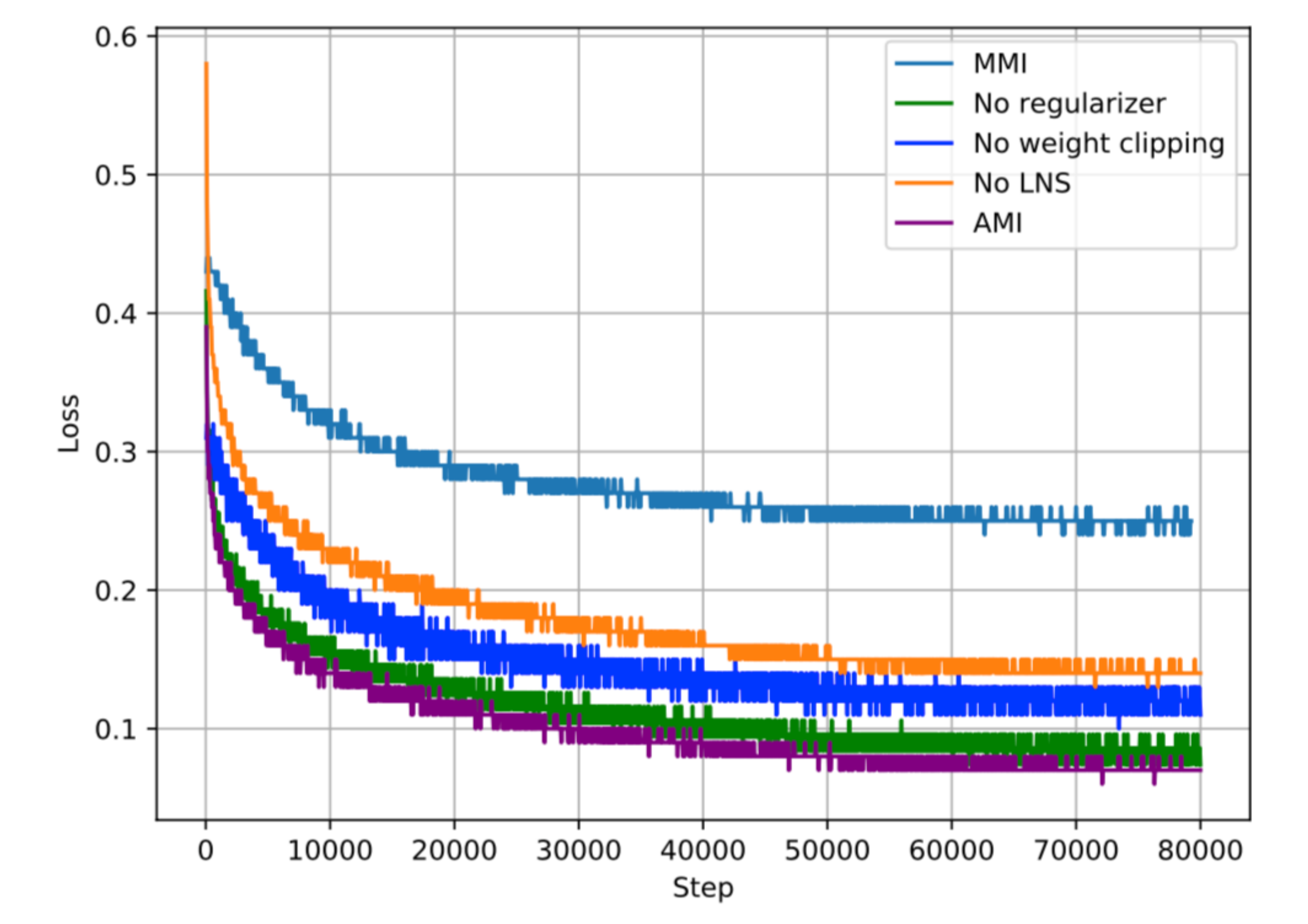}
	\caption{\label{fig2}Convergence curves for different ablated models. ``regularizer" denotes the cosine multiplier in the equation (\ref{eq_k}).}
	\vskip -0.1in
\end{figure}

In Table \ref{tab1}, we apply our AMI framework to LSTM and Transformer, and compare them with other competitive baseline models on the PersonaChat dataset. As we can see, our method with Transformer clearly outperforms all the baselines and achieves state-of-the-art results. From the medium and bottom parts, we observe that our AMI greatly improves the relevance and diversity performances over both the LSTM and Transformer. Compared to MMI, our method significantly exceeds on both the relevance and diversity metrics, which indicates that our adversarial training scheme helps improve the quality of the backward model, which subsequently provides more credible rewards to the forward network. We also conduct ablation experiments that when we use beam search for sampling instead of our proposed latent sampling noise, we find that the results drop on both metrics, which demonstrates that this sampling strategy can enlarge the searching space from the high-level structure while maximizing the mutual information between the output and the source text.

As shown in Table \ref{tab22}, we compare the performance of the backward network before and after training by our AMI framework. We evaluate the embedding average of the outputs of the backward network when feeding it with the synthetic data and the real data. Compared to the MMI with fixed $\phi$ (\textit{i.e.} the pre-trained model), the performance of the regular MMI backward model drops and its output text is more relevant to ground truth when fed with the synthetic data rather than the real data. This means such a training framework misleads the backward model to give lower rewards to the generated targets that look more like the real ones, and thus results in negative effect to the optimization of the forward model. With our adversarial training scheme, the backward model performs better when fed with the real data and worse with the synthetic data, which means it improves the ability to manage the information flow.

We also present the convergence curves over different ablated models of the AMI for LSTM in Figure \ref{fig2}. We can see that the regularizer $K(T')$ (equation (\ref{eq_k})) helps accelerate the convergence of the model. We also observe that without the weight clipping, which is important in transforming our objective to minimizing the Wasserstein distance, the training becomes more unstable and the loss can not approach the performance of the full AMI.

\paragraph{Human Evaluation}
\begin{table}
\small
	\begin{center}

		\begin{tabular}{lccc}
			\toprule
			\textbf{Opponent Models} &  \textbf{Wins}&  \textbf{Losses} & \textbf{Ties}\\
			\midrule
            LSTM + \textbf{AMI} \textit{vs.} LSTM + MMI & \textbf{0.62} & 0.24 & 0.14\\ 
            TF + \textbf{AMI} \textit{vs.} TF + MMI  & \textbf{0.54} & 0.28 & 0.18\\ 
            TF + \textbf{AMI} \textit{vs.} Dir-VHRED & \textbf{0.36} & 0.30 & 0.34 \\ 
            Dir-VHRED \textit{vs.} Human & 0.28 & \textbf{0.56} & 0.16 \\ 
            LSTM + \textbf{AMI} \textit{vs.} Human & 0.24 & \textbf{0.54} & 0.22 \\ 
            TF + \textbf{AMI} \textit{vs.} Human & 0.32 & \textbf{0.48} & 0.20 \\ 
			\bottomrule
		\end{tabular}
	\end{center}
		\caption{\label{tab3} Human evaluation results for the dialog generation. ``TF" means the Transformer, ``Human" means the original human-created dialog utterances in the dataset. Both ``Wins" and ``Losses" point to the left models.} 
\end{table}

We conduct the human evaluation to measure the quality of the generated dialog utterances, as shown in Table \ref{tab3}. Models are paired as opponents and we randomly presented 120 dialog history and the outputs of each paired models to 5 judges, who are asked to decide which of the two outputs is better. Ties are permitted. From the human evaluated results, our AMI is proven to be better at generating a coherent and meaningful dialog response than MMI, and our AMI on Transformer performs better than Dir-VHRED, one of the state-of-the-art dialog models. However, all of the synthetic utterances lost in the comparisons with the human created sentences, which indicates that the current generative models still have much room for improvement in understanding and generating natural language.

\paragraph{Bound Analysis}
\begin{figure}[t]
	\center
	\includegraphics[width=0.4 \textwidth]{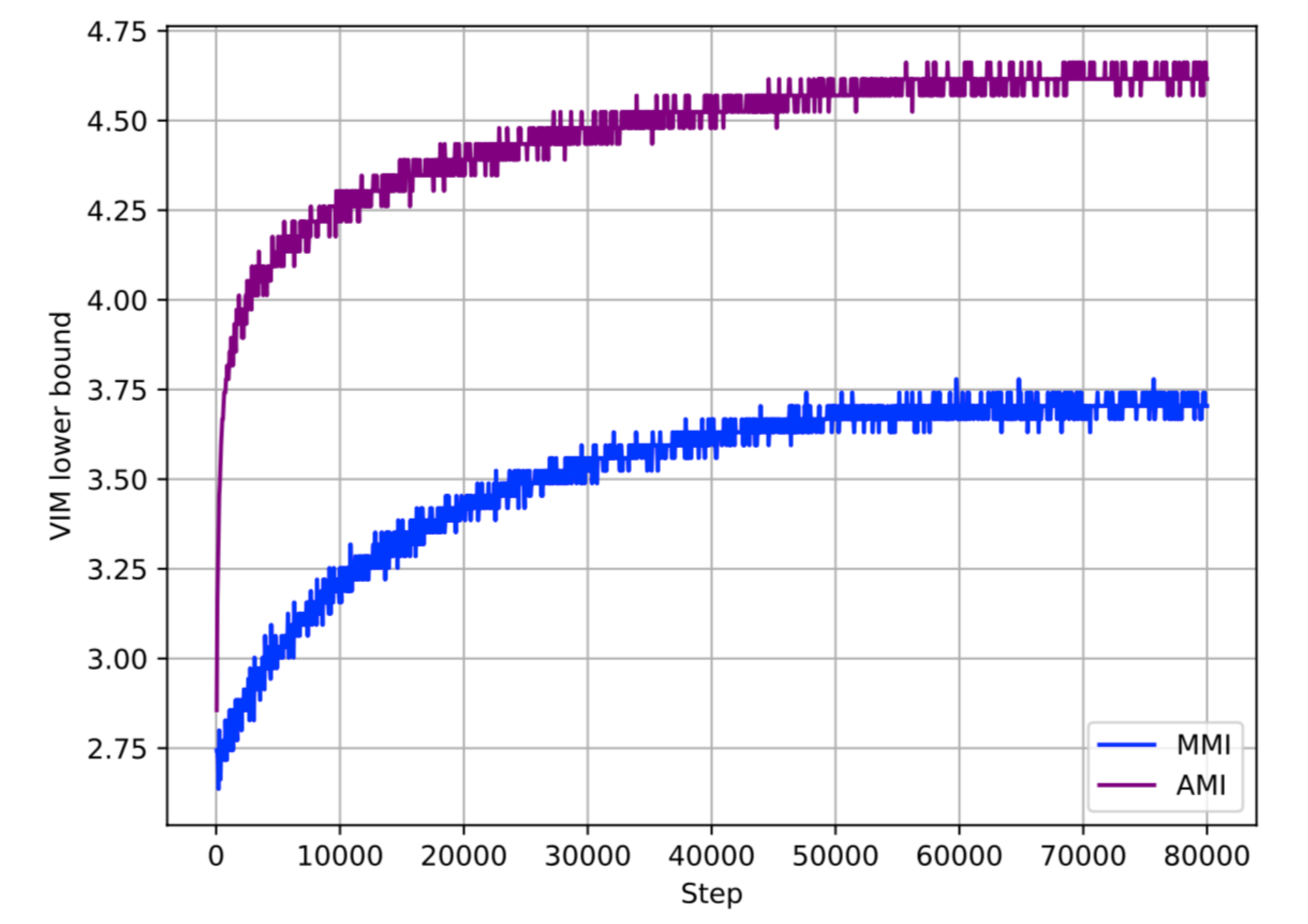}
	\caption{\label{fig3}Variational lower bound over training steps.}
	\vskip -0.1in
\end{figure}

To evaluate whether the mutual information between the source and target text can be efficiently maximized, or whether our method can lead to a tighter variational lower bound, we compare the variational information maximization (VIM) lower bounds over MMI and our proposed AMI framework, both in Transformer architectures. We see from Figure \ref{fig3} that the lower bound in AMI is higher than in MMI. According to the equation (\ref{elbo}), if the backward models $Q_{\phi}$ of AMI and MMI get the same KL-divergence value, then the AMI obtains a higher mutual information between the source and target than MMI; On the other hand, if AMI and MMI get the same mutual information, the AMI obtains a lower KL-divergence of the backward model than the MMI, which means the backward network in the AMI is more closed to the posterior distribution, thus the variational lower bound becomes tighter.

\paragraph{Case Study}

\begin{table}[t]
    \setlength{\abovecaptionskip}{0pt}
    \setlength{\belowcaptionskip}{-2pt}
	\begin{center}
		\begin{tabular}{p{0.95\columnwidth}}
			\toprule
			{\small [A]: Oh , we had a graduation party from college for me. }\\
			{\small [B]: Congratulations ! What is your degree in? } \\ 
			{\small \textbf{Transformer: }I am studying to become a fashion designer.}\\ 
			{\small \textbf{Transformer + MMI: }I am studying law. What is your favorite color?} \\ 
			{\small \textbf{Transformer + AMI: }I am majoring in accounting and minoring in business management. What about you?} \\
			{\small \textbf{Human: }Math. Also had a wedding reception for my brother.}\\ \hline \hline
			{\small [A]: What is your novel about? I am a recent college graduate.}\\
			{\small [B]: It is actually a romance novel, but it has sci-fi elements.} \\ 
			{\small \textbf{Transformer: }What do you do for a living?}\\ 
			{\small \textbf{Transformer + MMI: }I do not really have a favorite. I do not mind.} \\ 
			{\small \textbf{Transformer + AMI: }Oh, I see. I am a huge fan of romance books.} \\
			{\small \textbf{Human: }That sounds interesting, I would like to read it some day.}\\
			\bottomrule
		\end{tabular}
	\end{center}
		\caption{\label{tab4} Examples of dialog generation. ``Human" denotes the original response in the dataset, and all the generation models represent the role [A].}   
\end{table}

In Table \ref{tab4}, we show two examples of the generated dialog responses along with the dialog history. In the first case, the MMI model tries to improve communication by adding a question. However, the added question ``what is your favorite color" has nearly no connection with the previous dialog history. Our method provides a more informative answer to the question of the role [B], and raises a very reasonable question in this situation. In the second case, the Transformer baseline produces a very common question, which can just carry the conversation but is not coherent and has little mutual information with the previous dialog history. The MMI provides a response that has more connection with the dialog history but is less suitable for this conversation since no one asked about his/her preference. Our method, however, generates a response that is not only informative but also coherent to the previous dialog. 

\subsection{Neural Machine Translation}
\label{nmt}
We also evaluate our method on the neural machine translation (NMT) task, which requires to encode a source language sentence and predict a target language sentence.

\paragraph{Dataset}
The dataset for our evaluation is the WMT translation task between English and German in both directions and the translation performances are reported on the official test set \texttt{newstest2014}\footnote{\url{http://www.statmt.org/wmt14/translation-task.html}}. The dataset contains 4.5M training pairs, 2169 validation pairs and 3003 test pairs.

\paragraph{Implementation Details}
The training scheme for each module is the same as the dialog generation task. We use BLEU~\cite{papineni2002bleu} as the evaluation metric for translation quality. BLEU is the geometric mean of $n$-gram precisions where $n \in \{1,2,3,4\}$ weighted by sentence lengths. Detailed configurations are in the Appendix \ref{app_nmt}.

\paragraph{Results}

\begin{table}
\small
	\begin{center}
		\begin{tabular}{lcc}
			\toprule
			\textbf{Models} & \textbf{En}$\rightarrow$\textbf{De} & \textbf{De}$\rightarrow$\textbf{En}   \\
			\midrule
        LSTM    & 23.1 & 27.0\\ 
        LSTM + MMI  & 23.3 & 27.3\\ 
        LSTM + \textbf{AMI} (w/o LNS) & 24.8 & 28.9  \\
        LSTM + \textbf{AMI} & 25.0 & 29.1  \\ \hline
        Transformer & 26.7 & 30.2 \\ 
        Transformer + MMI  & 26.9 & 30.5  \\  
        Transformer + \textbf{AMI} (w/o LNS)& 28.2 & 31.9  \\ 
        Transformer + \textbf{AMI} & \textbf{28.5} & \textbf{32.1}\\
			\bottomrule
		\end{tabular}
	\end{center}
	
	\caption{\label{tab5} Machine translation results of the BLEU scores on WMT English-German for \texttt{newstest2014}.}  
\end{table}

We present the performance of our AMI for machine translation in Table \ref{tab5}. Similar to the results of dialog generation, our AMI framework improves upon both the LSTM and Transformer, and outperforms the MMI for about 1.4 BLEU scores. We also find that the improvement from LNS is quite modest, and we conjecture this is because the NMT task has the nature that the semantics of the target are fully specified by the source.

\paragraph{Case Studies}

\begin{table}[t]
    \setlength{\abovecaptionskip}{0pt}
    \setlength{\belowcaptionskip}{-2pt}
	\begin{center}
		\begin{tabular}{p{0.95\columnwidth}}
			\toprule
			{\small Ein weiteres Radarsensor prüft, ob die Grünphase für den Fußgänger beendet werden kann. }\\ \hline
			{\small \textbf{Transformer: }Another radar sensor checks whether the green phase can be terminated for pedestrians.}\\ 
			{\small \textbf{Transformer + MMI: }Another radar sensor checks whether the green phase can be closed for the pedestrian.} \\ 
			{\small \textbf{Transformer + AMI: }Another radar sensor checks whether the green phase for pedestrians can be stopped.} \\
			{\small \textbf{Human: }An additional radar sensor checks whether the green phase for the pedestrian can be ended.}\\ 
			\bottomrule
		\end{tabular}
	\end{center}
		\caption{\label{tab6} An example of German-to-English machine translation. ``Human" denotes the original sentence in the dataset.} 
		\vskip -0.1in
\end{table}

In Table \ref{tab6}, we present an example of the German-to-English translation over different models. As we can see, all of the three models correctly translate almost all the words. However, both the Transformer and Transformer with MMI translate the goal of ``ending the green phase" to ``for pedestrian", which makes the meanings of the generated sentences totally different from the original one. However, our method produces almost the same sentence with the ground truth one except for a few synonyms.
\section{Related Works}
Estimating mutual information~\cite{bahl1986maximum,brown1987acoustic} has been comprehensively studied for many tasks such as Bayesian optimal experimental design~\cite{ryan2016review,foster2019variational}, image caption retrieval~\cite{mao2014deep}, neural networks explanation~\cite{tishby2000information,tishby2015deep,gabrie2018entropy}, \textit{etc}. However, adapting MMI to sequence modeling such as text generation is empirically nontrivial as we typically have access to discrete samples but not the underlying distributions~\cite{poole2019variational}. \citet{li2016diversity} proposed to use MMI as the objective function to address the issue of output diversity in the neural generation framework. However, they use the MI-prompting objective only for testing, while the training procedure remains the same as the standard MLE. \citet{li2016deep} addressed this problem by using deep reinforcement learning to set the mutual information as the future reward. \citet{zhang2018generating} learned a dual objective to simultaneously learn two mutual information between the forward and backward models.~\citet{ye2019jointly} proposed the dual information maximization to jointly model the dual information of two tasks. However, they optimize the backward model in the same direction with the forward model, which would limit its approach to the true posterior distribution, thus result in an unreliable reward for the forward model. 

Recent development of dual learning leverages the similar idea of enforcing forward-backward consistency in language translation~\cite{he2016dual,artetxe2018unsupervised} and image-to-image translation~\cite{zhu2017unpaired,yi2017dualgan}. They minimize the reconstruction loss of backward translation to verify and improve translation quality. Unlike these works, our backward network is trained as an auxiliary model, which aims to play against the forward network to iteratively promote generation of more meaningful text.

Adversarial training for text generation has also gained significant popularity~\cite{yu2017seqgan,li2017adversarial,zhang2017adversarial,che2017maximum,lin2017adversarial,guo2018long,chen2018adversarial}. The idea behind all these works is to use the real and synthetic data to train a classifier as the discriminator, which challenges the generator to produce text that looks more natural. However, some safe but uninformative phrases like ``I don't know" or ``can you say it again" can be judged as human-generated in many instances especially when they appear frequently in the dataset, which would be easier to learn by the discriminator. Therefore, in this paper, we use the mutual information as the reward, which is much stricter than a binary classifier because it will give positive signals only if the backward model can reconstruct the source text from the generated target text. 

\section{Conclusion}
In this paper, we introduced \textit{Adversarial Mutual Information} (AMI), a novel text generation framework that addresses a minimax game to iteratively learn and optimize the mutual information between the source and target text. The forward network in the framework is trained by playing against the backward network that aims to reconstruct the source text only if its input is in the real target distribution. We proved that our objective can be trained more stably by being transformed to the problem of Wasserstein distance minimization. The experimental results on two popular text generative tasks demonstrated the effectiveness of our framework, and we show our method has the potential to lead a tighter lower bound of the MMI problem. In future, we will attempt to explore a lower variance and more unbiased gradient estimator for the text generator in this framework and apply the AMI in multi-modality situations.
\section*{Acknowledgements}
This work was supported in part by The National Key Research and Development Program of China (Grant Nos: 2018AAA0101400), in part by The National Nature Science Foundation of China (Grant Nos: 61936006), in part by the Alibaba-Zhejiang University Joint Institute of Frontier Technologies, in part by the China Scholarship Council, and in part by U.S. Department of Energy by Lawrence Livermore National Laboratory under Contract DE-AC52-07NA27344.

\bibliography{AMI_Dialog}
\bibliographystyle{icml2020}

\newpage
~
\appendix
~
\newpage

\section{Wasserstein Distance}
As shown in the equation (9) in the regular paper, the Wasserstein distance is formulated as:
\begin{equation}
\small
\begin{aligned}
W(P_{r},  P_{\theta}) = \inf_{\gamma \in \prod (P_{r},  P_{\theta})}\mathds{E}_{(x,y) \sim \gamma} \left[ \lVert x - y \lVert \right]
\end{aligned}
\end{equation}
where $\prod (P_{r},  P_{\theta})$ is the set of all joint distributions
$\gamma (x, y)$ whose marginals are respectively $P_r$ and $P_{\theta}$. Intuitively, $\gamma (x, y)$ indicates how much “mass” must be transported from $x$ to $y$ in order to transform the distributions $P_r$ into the distribution $P_{\theta}$. The Wasserstein distance then is the ``cost" of the optimal transport plan, which is also called the Earth-Mover distance. The Wasserstein distance is proven to be much weaker than many other common distances (\textit{e.g.} JS distance) so simple sequences of probability
distributions are more likely to converge under this distance~\cite{arjovsky2017wasserstein}. In this paper, we prove that our proposed objective function is equivalent to minimizing the Wasserstein distance between the synthetic data distribution and the real data distribution.

\section{Configuration Details}
We implement our models based on the OpenNMT framework\footnote{\url{https://github.com/OpenNMT/OpenNMT-py}}~\cite{klein2017opennmt}.
\subsection{Dialog Generation}
\label{app_dialog}
The sentences are tokenized by splitting on spaces and punctuation. We set the LSTM hidden unit size to 500 and set the number of layers of LSTMs to 2 in both the encoder and the decoder. Optimization is performed using stochastic gradient descent, with an initial learning rate of 1.0. The learning rate starts decaying at the step 15000 with a decay rate of 0.95 for every 5000 steps. During training, we iteratively update the noise $\delta$, forward network and backward network respectively. The mini-batch size for the update is set at 64. We set the dropout~\cite{srivastava2014dropout} ratio as 0.3. For the Transformer, The dimension of input and output is 512, and the inner-layer has the dimension of 2048. we employ 8 parallel attention layers (or heads). We set the dropout ratio as 0.1, batch size as 64. We use the Adam optimizer~\cite{kingma2014adam} with $\beta_1 = 0.9, \beta_2 = 0.998$ and $\epsilon = 10^{-9}$. We use a maximum dialog history length of 3 to prune the irrelevant context information. The $\lambda$ in the latent noise sampling (LNS) is set as 0.1.


\subsection{Neural Machine Translation}
\label{app_nmt}
When training our NMT systems, we split the data into subword units
using BPE~\cite{sennrich2016neural}. We train 4-layer LSTMs of 1024 units with bidirectional encoder, the embedding dimension is 1024. The model is trained with stochastic gradient descent with a learning rate that began at 1. We train for 340K steps; after 170K steps, we start halving the learning rate every 17K step. During training, we iteratively update the noise $\delta$, forward network and backward network for 5k steps, 35 steps, 10k steps respectively. Our batch size is set as 128, and the dropout rate is 0.2. We adopt the default setting of the base Transformer for NMT in the OpenNMT framework. The $\lambda$ in the latent noise sampling (LNS) is set as 0.1.

\newpage
\section{Error Analysis}
In Table \ref{tab_err}, we show two examples for a representative problem of leveraging the mutual information in dialog generation. In these cases, both the MMI and AMI generate sentences about the hobbies or preferences, which have strong connection with the word ``fun" in the dialog history of the first example, and ``favorite" in the dialog history of the second example but not suitable for the given contexts. This indicates that maximizing the mutual information between the source and target may sometimes overly focuses on only a part of the sequences, thus ignores the whole semantic meanings and results in improper responses. In contrast, although the original Transformer produces a common sentence ``I do it for a living" in the first case, and an unrelated sentence ``Are you a hunter, or have you ever been" in the second case, they are the better choices compared to another two candidates in the table.

\begin{table}[bh]
    \setlength{\abovecaptionskip}{0pt}
    \setlength{\belowcaptionskip}{-2pt}
	\begin{center}
		\begin{tabular}{p{0.95\columnwidth}}
			\toprule
			{\small [A]: I teach a kindergarten class in Seattle. }\\
			{\small [B]: That must be fun, but also a lot. } \\ 
			{\small \textbf{Transformer: }I do it for a living.}\\ 
			{\small \textbf{Transformer + MMI: }Do you watch game of thrones?} \\ 
			{\small \textbf{Transformer + AMI: }Okay. Do you have any hobbies?} \\
			{\small \textbf{Human: }It is difficult but it is my passion. Passion has trouble paying the bills though.}\\ \hline \hline
			{\small [A]: What are you going shopping for? }\\
			{\small [B]: Clothes. I am a tall guy and kind of big so shopping is not my favorite. } \\ 
			{\small \textbf{Transformer: }Are you a hunter, or have you ever been?}\\ 
			{\small \textbf{Transformer + MMI: }I listen to much music. Do you play the guitar?} \\ 
			{\small \textbf{Transformer + AMI: }I do not like to clean up after my husband.} \\
			{\small \textbf{Human: }Do not like shopping much either, cause i am broke. I will be successful soon though!}\\
			\bottomrule
		\end{tabular}
	\end{center}
		\caption{\label{tab_err} Two Examples of dialog generation in PersonaChat dataset. ``Human" denotes the original response in the dataset, and all the generation models represent the role [A].}   
\end{table}

\end{document}